\newtheorem{definition}{Definition}
\newtheorem{lemma}{Lemma}
\newtheorem{problem}{Problem}
\newtheorem{theorem}{Theorem}
\def\be{\boldsymbol e}
\def\bu{\boldsymbol u}
\def\bx{\boldsymbol x}
\def\bs{\boldsymbol s}
\def\bh{\boldsymbol h}
\title{\LARGE \bf
Non-cooperative Stochastic Target Encirclement by Anti-synchronization Control via Range-only Measurement 
}
\author{Fen Liu,~Shenghai Yuan,~Wei Meng,~Rong Su, ~Lihua Xie, ~\textit{Fellow,~IEEE}
\thanks{This work was partially supported by the National Natural Science Foundation of China (U21A20476, 62121004), Guangdong Introducing Innovative and Entrepreneurial Teams (2019ZT08X340) of Guangdong Province.}
\thanks{F. Liu, W. Meng are with Guangdong Provincial Key Laboratory of Intelligent Decision and Cooperative Control, School of Automation, Guangdong University of Technology, Guangzhou 510006.
        {\tt\small liufen7536@163.com, meng0025@ntu.edu.sg.}}%
\thanks{S. Yuan, R. Su, L. Xie are with the Centre for Advanced Robotics Technology Innovation (CARTIN), School of Electrical and Electronic Engineering, Nanyang Technological University, Singapore 639798, Singapore.
        {\tt\small shyuan@ntu.edu.sg, rsu@ntu.edu.sg, elhxie@ntu.edu.sg.}}%
}
\begin{document}


\maketitle
\begin{figure*}
    \centering
    \includegraphics[width=0.8\textwidth]{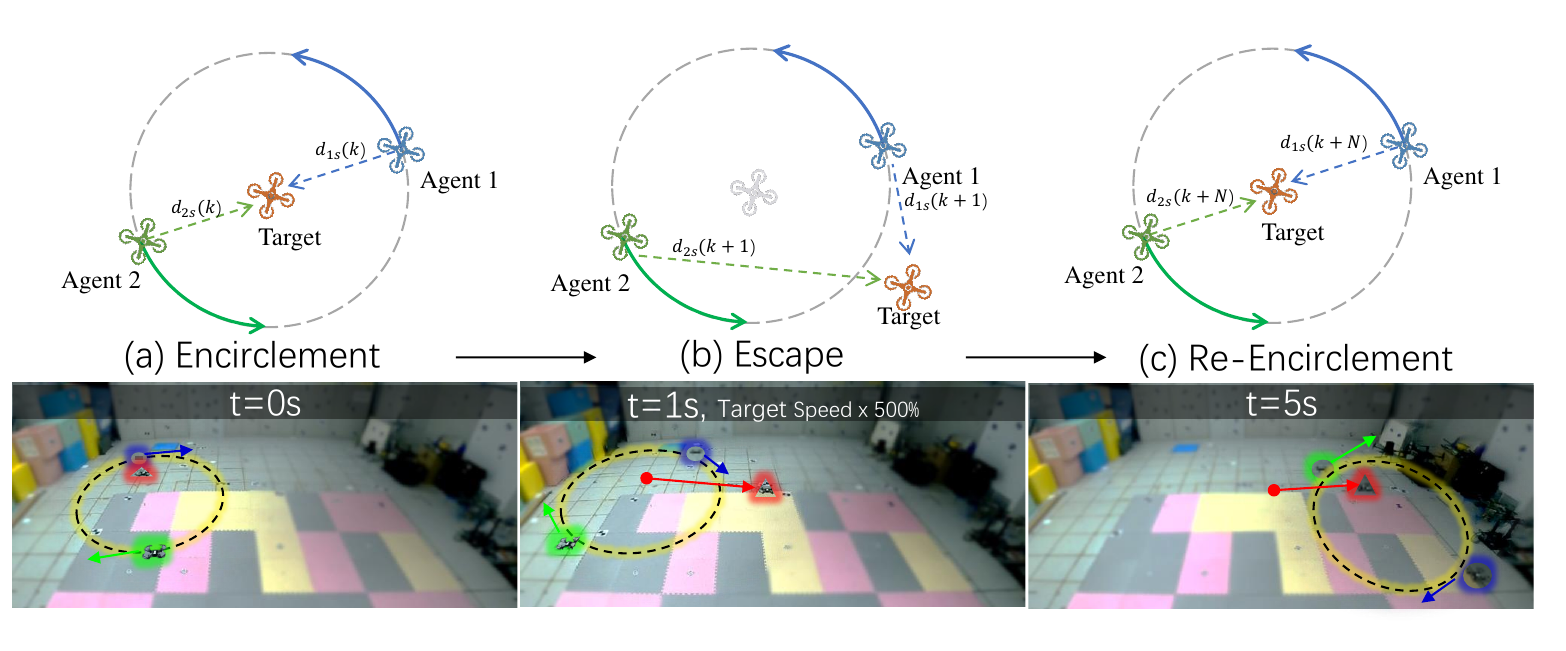}
    \caption{Overall illustration of proposed work.}
    \setlength{\belowcaptionskip}{-10pt}
    \label{as}
\end{figure*}

\begin{abstract}
This paper investigates the stochastic moving target encirclement problem in a realistic setting. In contrast to typical assumptions in related works, the target in our work is non-cooperative and capable of escaping the circle containment by boosting its speed to maximum for a short duration. Considering the extreme environment, such as GPS denial, weight limit, and lack of ground guidance, two agents can only rely on their onboard single-modality perception tools to measure the distances to the target. The distance measurement allows for creating a position estimator by providing a target position-dependent variable. Furthermore, the construction of the unique distributed anti-synchronization controller (DASC) can guarantee that the two agents track and encircle the target swiftly. The convergence of the estimator and controller is rigorously evaluated using the Lyapunov technique. A real-world UAV-based experiment is conducted to illustrate the performance of the proposed methodology in addition to a simulated Matlab numerical sample. Our video demonstration can be found in the URL \textcolor[rgb]{0.00,0.0,1.00}{https://youtu.be/JXu1gib99yQ}.
 \end{abstract}

\section{Introduction}
Target encirclement has a wide range of practical civil and military applications, such as convoy protection, target surveillance, and criminal entrapment \cite{Dong2020Target,jiang2019line,Yu2021Cooperative,peng2020event,zhang2022multi}. The tasking agents may have a beacon, ranging device or prior knowledge of the target but can not control the behavior of the target \cite{Zuo2019Time,Thien2020Single,peng2020moving,Jiang2017Simultaneous,aranda2014three}. Generally, the target state estimators need to be designed carefully so that the agent can make use of the state and plan the path accordingly \cite{Hua2019Distributed,Cao2011Formation,Li2018Localization}. The state of the target can be estimated by various methods, such as angle-based feedback, distance-based algorithms, or even an indoor motion-tracking system \cite{Shames2012Circumnavigation,Yu2018Distributed,Thien2020Persistently,guo2019ultra}. Based on a reasonable state estimator, the tasking agent can encircle the target using various decision and control models.

In most encirclement works, many assumptions have to be made on the non-cooperative target, such as stationary, constant speed, and low speed \cite{Dong2017Time,Dong2020Flight,guo2010cooperative}. Very few works touch on stochastic motion types \cite{zhang2022multi,dou2021moving,dong2017necessary,nguyen2018robust}. In some real-world situation, the target may be able to shortly boost the escaping capabilities to the maximum by using equipment like nitrous oxide gas to accelerate the car's (target) speed to escape the police (agents) chase. In such cases, the encirclement problem can often become a pursuit and evade problem, and it is difficult for the agents to continue encircling the target.

In this paper, the encirclement problem of non-cooperative targets is investigated, which is shown as Fig. \ref{as}. Our work is applicable to a large variety of mobile robotics systems, such as fixed-wing aircraft, multi-rotor UAVs, ground robotics systems, and unmanned surface vehicles. Our solution can be integrated with any UWB or cameras on normal mobile robotics platforms and can also work with high-latency miniature-sized prototypes such as Tello drones. We validated our results with MATLAB simulations for UGV agents and real-world demonstrations with a swarm of Tello drones without the need for any motion-capturing system or GPS.

The main contributions of this work are summarized as:
\begin{enumerate}
  \item To the best of our knowledge, we are the first to study the encirclement of dynamic stochastic moving targets. These targets simulate the real-world combined problem of encirclement, pursuit, and evasion by assuming that the target can increase its speed to the maximum in order to evade the encirclement.
  \item The position of the non-cooperative target can be approximately estimated by using only two distance measurements made by the two agents using their own sensors. No other real-time information about the target is required.
  \item Based on the principle of anti-synchronization (AS) \cite{Liu2021Anti,Liu2022Bounded}, a new distributed anti-synchronization controller (DASC) is designed so that the two agents can quickly localize, track, and encircle the target symmetrically.
\end{enumerate}

\section{Problem formulation}
In this work, two agents and one non-cooperative target are considered in a GPS-denied environment. In practical application, it can be extended to multiple pairs of agents encircling multiple targets. Without loss of generality, the two agents are defined as Agent 1 and Agent 2. The stochasticity of the target includes two meanings. The first refers to the stochasticity of the target speed, that is, the target can move at any low speed for most of the time. The second is that the target can move at a high speed at stochastic moments. Furthermore, suppose that two agents can obtain their own displacement and measure the distance between each other and the distance to the target through their own sensors.

Considering several common types of agents, e.g., unmanned ground vehicle (UGV), the approximated model can be obtained as following formulas.
\begin{equation}\label{eq1-1}
\left\{\begin{split}
x_i(k+1)=&x_i(k)+\triangle x'_i(k)\cos(\phi_i(k))\\
&-\triangle y'_i(k)\sin(\phi_i(k)),\\
y_i(k+1)=&y_i(k)+\triangle x'_i(k)\sin(\phi_i(k))\\
&+\triangle y'_i(k)\cos(\phi_i(k)),\\
\end{split}\right.
\end{equation}
where $x_i(k)$, $y_i(k)$ represent the positions of the agent $i,  i\in \Phi_1\triangleq\{1,2\}$ in the $X$, $Y$ axes of the global coordinate system at $k$ time, respectively. $\phi_i(k)$ is the yaw of the agent $i$. $\Delta x'_i(k)$ and $\Delta y'_i(k)$ are the change values of position in the local coordinate system.

Based on the above model and the assumption that the change values in the position and angle of the agents are determined by external inputs, the dynamical model of two agents can be defined as
\begin{equation}\label{eq1-2}
\begin{split}
&\bx_{i}(k+1)=\bx_{i}(k)+ \bu_{i}(k),
\end{split}
\end{equation}
where $\bx_i(k)=[x_i(k),y_i(k)]^T$,
and $\bu_i(k)=[\bu_{ix}(k),\bu_{iy}(k)]^T$ are the position and the controlled input of the agent, respectively.

Consider that the non-cooperative target generally moves at a low speed, and high-speed displacements will occur at some random moments. Regardless of the motion angle of the target, the dynamic of the non-cooperative target is described as
\begin{equation}\label{eq1-3}
\left\{\begin{split}
&\bs(k+1)=\bs(k)+\bh(k),\\
&\bh(k)=\nu(k)+\sum_{m=1}^\infty\{\theta(k)\}\delta(k-k_m),
\end{split}\right.
\end{equation}
where $\bs(k)=[\bs_x(k),\bs_y(k)]$ is the state of target at the time instant $k$. The term $\bh(k)$ is the unknown displacement of target, in which $\nu(k)<\bar{\nu}$ indicates the low speed part and $\sum_{m=1}^\infty\{\theta(k)\}\delta(k-k_m), m \in Z^+ $ represents the stochastic high-speed displacement. $\delta(k-k_m)$ is a stochastic impulsive function, in which $\delta(k-k_m)=1$ for $k=k_m$, and $\delta(k-k_m)=0$ for $k\neq k_m$. The initial impulsive $k_1=0$. The interval between two continuous impulses $k_{m}$ and $k_{m-1}$ is defined as $\ell_m$, and $\ell_m>\ell$. $\theta(k)<\bar{\theta}$ is an unknown random variable, and can take on very large values so as to break through the encirclement of the agents.

Then, a problem with non-cooperative target encirclement is stated as follows:

\begin{problem}
In this work, the target is non-cooperative, completely unknown, and at stochastic speed, that is, its speed, direction of movement, and location are not directly known to the agent and  has the capability to boost its speed to the maximum to evade encirclement. Therefore, the two agents can  achieve the following goals only through the distance information acquired by their own onboard sensors.
\begin{enumerate}
  \item  The two agents can obtain the approximate position of the target by designing an estimator.
  \item The two agents can persistently and symmetrically navigate around a circle centered on the target by the designed controller.
\end{enumerate}
\end{problem}

\section{Estimator and controller design}
Denote the self-displacement $\psi_i(k)$ of the agent, the distance $d_{12}(k)$ between two agents and the distance $d_{is}(k)$ from the $i$-$th$ agent to the target as
\begin{equation}\label{eq1-4}
\left\{\begin{split}
&\psi_i(k)=\bx_i(k+1)-\bx_i(k),\\
&d_{12}(k)=\|\bx_1(k)-\bx_2(k)\|,\\
&d_{is}(k)=\|\bx_i(k)-\bs(k)\|,i\in \Phi_1.\\
\end{split}\right.
\end{equation}

In order to estimate the state of the target, the following variable that is related to the position of the target can be obtained by calculating the formula $d_{1s}^2(k)-d_{2s}^2(k)$,
\begin{equation}\label{eq1-5}
\begin{split}
\varpi(k)=&p_{12}^T(k)\bs(k)\\
=&-\frac{1}{2}(d_{1s}^2(k)-d_{2s}^2(k)-\bx_1^T(k)\bx_1(k)\\
&+\bx_2^T(k)\bx_2(k)),
\end{split}
\end{equation}
where $p_{12}(k)=\bx_1(k)-\bx_2(k)$ is the relative position from Agent $1$ to Agent $2$.

\subsection{Estimator design}
Based on the variable $\varpi(k)$, the dynamic of the target position estimator (TPE) can be designed as
\begin{equation}\label{eq1-6}
\begin{split}
\hat{\bs}(k+1)=\hat{\bs}(k)+K(k+1)(\varpi(k+1)-p_{12}^T(k+1)\hat{\bs}(k)),
\end{split}
\end{equation}
where $\hat{\bs}(k)$ is the estimated state of target position at instant $k$. $K(k+1)$ is the estimator gain and can be further described as follows by the least squares fit,
\begin{equation}\label{eq1-7}
\begin{split}
K(k)&=\frac{\eta(k-1)p_{12}(k)}{\gamma_1\gamma_2+p_{12}^T(k)\eta(k)p_{12}(k)},
\end{split}
\end{equation}
where the covariance matrix $\eta(k) \in \mathrm{R}^{n\times n}$ $(\eta(k)=\eta^T(k)>0)$ is defined as
\begin{equation}\label{eq1-8}
\begin{split}
\eta^{-1}(k)=&\gamma_1\eta^{-1}(k-1)+\frac{1}{\gamma_2}p_{12}(k)p_{12}^T(k),
\end{split}
\end{equation}
$\gamma_1 \in [0,1]$ is an exponential forgetting factor and $\gamma_2 \in [0,1]$ is a new information utilization factor.

The formulation $\|\zeta(r,k)\|=r$ can be obtained. Since $p_{12}(k+1)=p_{12}(k)+\bu_{1}(k)-\bu_{2}(k)$, we have $\|\bu_{1}(k)-\bu_{2}(k)\|\leq\bar{\mu}$ with $\bar{\mu}=|\alpha|(d_{12}(0)+2r)$ for $0<|1+\alpha|<1$. 

The estimation error is defined as $\hat{\be}(k)$. Then, according to the above estimator \eqref{eq1-6}, the dynamic of $\hat{\be}(k)$ can be further obtained as
\begin{equation}\label{eq1-9}
\hat{\be}(k+1)=
\left\{\begin{split}
&A(k)(\hat{\be}(k)+\nu(k)),k\neq k_m,\\
&A(k)(\hat{\be}(k)+\nu(k)+\theta(k)),k=k_m,
\end{split}\right.
\end{equation}
where $A(k)=I-K(k+1)p_{12}^T(k+1)$.

\subsection{Controller design}
Based on the TPE designed above, the DASC can be designed as
\begin{equation}\label{eq1-10}
\left\{\begin{split}
\bu_{1}(k)=&\alpha(\hat{p}_{10}(k)+\zeta(r,k)),\\
\bu_{2}(k)=&\alpha(\hat{p}_{20}(k)-\zeta(r,k)),\\
\end{split}\right.
\end{equation}
where $\alpha$ is controller gain that needs to be designed and $\hat{p}_{10}(k)=\bx_1(k)-\hat{\bs}(k)$. $\zeta(r,k)\in \mathrm{R}^n$ is the preset trajectory that makes Agent 1 and Agent 2 circumnavigate around the target, which satisfies $\zeta(r,k)=\zeta(r,k+\frac{2}{\nu})$, $r$ denotes the minimum radius of the Agent 1 and Agent 2's trajectories, $0<\nu<1$ denotes the frequency of circumnavigation. 

Recalling model \eqref{eq1-1}, we can have
\begin{equation}\label{eq1-11}
\left\{\begin{split}
\triangle x'_i(k)=&-\bu_{ix}\sin(\phi_i(k))+\bu_{iy}\cos(\phi_i(k)),\\
\triangle y'_i(k)=&\bu_{ix}\cos(\phi_i(k))+\bu_{iy}\sin(\phi_i(k)),\\
\phi_{i}(k)=&arc\tan2\{\hat{\bs}_y(k)-y_i(k),\hat{\bs}_x(k)-x_i(k)\},
\end{split}\right.
\end{equation}
where $arc \tan2 \{x, y\}$ calculates a unique arc tangent value from two variables $x$ and $y$.

Based on the models \eqref{eq1-2}, \eqref{eq1-3} and the DASC \eqref{eq1-10}, the AS error is defined as $\be_s(k)=\bx_1(k)-\bs(k)+\bx_2(k)-\bs(k)$. Furthermore, the dynamic of $\be_s(k)$ can be deduced as
\begin{equation}\label{eq1-12}
\be_s(k+1)=\left\{\begin{split}
&(1+\alpha)\be_s(k)+2\alpha\hat{\be}(k)-2\nu(k),\\
&~~~~k\neq k_m,\\
&(1+\alpha)\be_s(k)+2\alpha\hat{\be}(k)-2(\nu(k)\\
&~~~~+\theta(k)),k=k_m.
\end{split}\right.
\end{equation}

\subsection{Convergence analysis}
At first, some lemmas and definitions are given which are useful for analyzing error convergence.

\begin{definition}
Agent 1 and Agent 2 can achieve target centered AS if the following conditions hold,
\begin{equation}\label{eq1-13}
\begin{split}
&\underset{k\rightarrow \infty} {\mathrm{lim}}\{p_{1s}(k)+p_{2s}(k)\}=0,
\end{split}
\end{equation}
where $p_{is}(k)=\bx_i(k)-\bs(k), i\in \Phi_1$.
\end{definition}

\begin{lemma}\cite{Johnstone1982Exponential}
The sequence $\{p_{12}(k)\}, k\in[M,M+N-1], \forall M\in Z$ is persistently exciting for $ \|\bu_1(k)-\bu_2(k)\|\leq\overline{\mu}$, that is, $0<\hat{\vartheta}I_n\leq\sum_{k=M}^{M+N-1}p_{12}(k) p_{12}^T(k)\leq\check{\vartheta}I_n<\infty$, then the following condition holds,
\begin{equation}\label{eq1-14}
\left\{\begin{split}
&\eta^{-1}(k)\geq\hat{b},~\forall k\geq N-1,\\
&\eta^{-1}(k)\leq\check{b},~\forall k\in Z,\\
\end{split}\right.
\end{equation}
where
\begin{equation*}
\begin{split}
\hat{b}=&\frac{N(1-\frac{1}{\gamma_1})}{\gamma_2(1-\frac{1}{\gamma_1^N})}\hat{\vartheta}I_n,\\
\check{b}=&\frac{1-\gamma_1}{1-\gamma_1^N}\sum_{k=1}^{N-1}\eta^{-1}(k)+\frac{N}{(1-\gamma_1^N)\gamma_2}\check{\vartheta}I_n,
\end{split}
\end{equation*}
and $N$ is the motion period that occurs when the two tasking agents are moving around the target. $ {I_n}$ is the ${n}$-dimensional identity matrix. 
\end{lemma}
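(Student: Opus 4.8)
The plan is to convert the recursive covariance update \eqref{eq1-8} into a closed-form exponentially weighted sum of the rank-one terms $p_{12}(j)p_{12}^T(j)$ and then sandwich this sum between the two persistency-of-excitation (PE) bounds; this is the discrete-time exponential-forgetting recursive-least-squares estimate of \cite{Johnstone1982Exponential}, specialised to the regressor $p_{12}(k)$, whose increments are bounded by $\overline{\mu}$ so that each length-$N$ window of $p_{12}$ enjoys the stated $\hat{\vartheta},\check{\vartheta}$ bounds. First I would iterate \eqref{eq1-8} to obtain, for every $k$,
\begin{equation*}
\eta^{-1}(k)=\gamma_1^{\,k}\,\eta^{-1}(0)+\frac{1}{\gamma_2}\sum_{j=1}^{k}\gamma_1^{\,k-j}\,p_{12}(j)p_{12}^T(j).
\end{equation*}
Since every summand is positive semidefinite and $\eta^{-1}(0)>0$, an induction on $k$ gives $\eta(k)=\eta^T(k)>0$, which is exactly what makes the gain \eqref{eq1-7} and the estimator \eqref{eq1-6} well posed; this tacitly uses $\gamma_1>0$, under which $\hat b,\check b$ are finite.

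For the \emph{lower} bound, fix $k\ge N-1$ and discard all but the most recent length-$N$ window $j\in\{k-N+1,\dots,k\}$; there $\gamma_1^{\,k-j}\ge\gamma_1^{\,N-1}$, so the left PE inequality gives
\begin{equation*}
\eta^{-1}(k)\ \ge\ \frac{1}{\gamma_2}\sum_{j=k-N+1}^{k}\gamma_1^{\,k-j}\,p_{12}(j)p_{12}^T(j)\ \ge\ \frac{\gamma_1^{\,N-1}}{\gamma_2}\,\hat{\vartheta}I_n.
\end{equation*}
To recover precisely $\hat b=\frac{N(1-1/\gamma_1)}{\gamma_2(1-1/\gamma_1^{N})}\hat{\vartheta}I_n$ one retains, instead of one window, the $\lfloor k/N\rfloor$ disjoint consecutive windows inside $[1,k]$: each contributes at least $\hat\vartheta I_n$ weighted by a block of geometric factors, and summing the geometric series in $\gamma_1$ and normalising by $N$ replaces $\gamma_1^{\,N-1}$ by $\tfrac{1}{N}(1+\gamma_1+\cdots+\gamma_1^{N-1})$ with denominator $1-\gamma_1^{N}$, which rearranges to $\hat b$.

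For the \emph{upper} bound, split the closed-form sum at $j=N-1$. The tail $\sum_{j=N}^{k}\gamma_1^{\,k-j}p_{12}(j)p_{12}^T(j)$ is grouped into consecutive blocks of length $N$; by the right PE inequality each block is at most $\check{\vartheta}I_n$, its weight is bounded by a power of $\gamma_1^{N}$, and the resulting geometric series sums to at most $\frac{N}{(1-\gamma_1^{N})\gamma_2}\check{\vartheta}I_n$. The leftover initial contributions $\gamma_1^{\,k}\eta^{-1}(0)$ and $\frac{1}{\gamma_2}\sum_{j=1}^{N-1}\gamma_1^{\,k-j}p_{12}(j)p_{12}^T(j)$ are absorbed, using $\gamma_1\in(0,1]$ together with \eqref{eq1-8}, into the term $\tfrac{1-\gamma_1}{1-\gamma_1^{N}}\sum_{k=1}^{N-1}\eta^{-1}(k)$, yielding $\eta^{-1}(k)\le\check b$ for every $k\in Z$.

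The only genuine obstacle is bookkeeping: aligning the length-$N$ block partition with the arbitrary window start $M$ in the PE hypothesis, handling the incomplete leading and trailing blocks correctly, and pushing the geometric summations through so the constants come out exactly as $\hat b$ and $\check b$. All the analytic content reduces to the monotonicity of $j\mapsto\gamma_1^{\,k-j}$ and the positive semidefiniteness of the rank-one regressor terms.
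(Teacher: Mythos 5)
This lemma is imported from \cite{Johnstone1982Exponential} and the paper supplies no proof of its own, so there is nothing internal to compare against; your attempt is a de novo reconstruction. Your overall architecture is the standard (and correct) one: unroll \eqref{eq1-8} into $\eta^{-1}(k)=\gamma_1^{k}\eta^{-1}(0)+\frac{1}{\gamma_2}\sum_{j=1}^{k}\gamma_1^{k-j}p_{12}(j)p_{12}^T(j)$, then sandwich the weighted sum using the persistency-of-excitation bounds window by window. This correctly yields positive definiteness of $\eta(k)$, a uniform upper bound of the form $\frac{N}{(1-\gamma_1^N)\gamma_2}\check{\vartheta}I_n$ plus an initial-condition term, and a uniform positive lower bound for $k\geq N-1$ --- which is all that Theorem 1 actually uses.

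However, there is a genuine gap in your claim to recover the exact constant $\hat{b}$. Note that $\hat{b}=\frac{N\gamma_1^{N-1}}{\gamma_2(1+\gamma_1+\cdots+\gamma_1^{N-1})}\hat{\vartheta}I_n$, i.e., it corresponds to weighting the window by the \emph{average} of the geometric factors $1,\gamma_1,\ldots,\gamma_1^{N-1}$ rather than by their minimum $\gamma_1^{N-1}$. Your step ``normalising by $N$ replaces $\gamma_1^{N-1}$ by $\frac{1}{N}(1+\gamma_1+\cdots+\gamma_1^{N-1})$'' is not justified: the PE hypothesis only controls the \emph{sum} $\sum_{j}p_{12}(j)p_{12}^T(j)$ over a window, not how the excitation is distributed inside it, and for a lower bound on $\sum_j\gamma_1^{k-j}A_j$ with $A_j\succeq 0$ one may only use the smallest weight. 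Concretely, in the scalar case with excitation concentrated in a periodic impulse train of period $N$ (which satisfies the PE hypothesis with equality on every window), the worst-case value of $\eta^{-1}(k)$ is of order $\frac{\gamma_1^{N-1}}{\gamma_2(1-\gamma_1^N)}\hat{\vartheta}$, which is smaller than $\hat{b}$ by the factor $\frac{1}{N(1-\gamma_1)}$; with the paper's own simulation values ($\gamma_1=0.3$, $N=10$) this factor is $1/7$, so your multi-window refinement cannot close the gap either. You should either prove the lemma with the honestly obtainable constant $\frac{\gamma_1^{N-1}}{\gamma_2}\hat{\vartheta}I_n$ (sufficient for the downstream convergence argument), or defer the exact constants to \cite{Johnstone1982Exponential} as the paper does, rather than assert that the bookkeeping ``rearranges to $\hat{b}$.'' A similar, milder caveat applies to the upper bound holding for \emph{all} $k\in Z$: for small $k$ the absorption of $\gamma_1^k\eta^{-1}(0)$ into $\frac{1-\gamma_1}{1-\gamma_1^N}\sum_{k=1}^{N-1}\eta^{-1}(k)$ does not go through without an additional assumption relating $\eta^{-1}(0)$ to $\check{\vartheta}/\gamma_2$.
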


\begin{theorem}
Under Lemma 1, the estimator \eqref{eq1-6} can approximately estimate the target position if the following condition holds by selecting appropriate factors $\gamma_1$,
\begin{equation}\label{eq1-15}
\begin{split}
0<2\gamma_1\leq\frac{2}{3}.
\end{split}
\end{equation}
\end{theorem}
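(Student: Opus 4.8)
\emph{Proof outline.} The plan is to run a Lyapunov argument tailored to the recursive-least-squares form of the estimator, using the weighted quadratic
\[
V(k)=\hat{\be}^T(k)\,\eta^{-1}(k)\,\hat{\be}(k).
\]
This is positive and well defined because $\eta^{-1}(k)=\eta^{-T}(k)>0$. Since the text has already established $\|\bu_1(k)-\bu_2(k)\|\le\bar{\mu}$, the sequence $\{p_{12}(k)\}$ is persistently exciting, so Lemma~1 supplies the uniform two-sided bounds $0<\hat{b}\le\eta^{-1}(k)\le\check{b}$ for all $k\ge N-1$; hence $V(k)$ is sandwiched between positive multiples of $\|\hat{\be}(k)\|^2$, and it suffices to prove that $V(k)$ is ultimately bounded by a constant.

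The crux is one algebraic identity for $A(k)=I-K(k+1)p_{12}^T(k+1)$. Inserting the gain \eqref{eq1-7} and the covariance recursion \eqref{eq1-8}, I would first verify $\eta^{-1}(k+1)K(k+1)=\gamma_2^{-1}p_{12}(k+1)$, and then expand the quadratic form to obtain
\[
A^T(k)\,\eta^{-1}(k+1)\,A(k)=\gamma_1\!\left(\eta^{-1}(k)-\frac{p_{12}(k+1)p_{12}^T(k+1)}{\gamma_1\gamma_2+p_{12}^T(k+1)\eta(k)p_{12}(k+1)}\right)\le\gamma_1\,\eta^{-1}(k).
\]
I expect this bookkeeping — in essence an instance of the matrix-inversion lemma — to be the main obstacle, because the theorem succeeds or fails according to whether the leading coefficient is exactly $\gamma_1$ rather than something larger.

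With that inequality the rest is routine. Propagating the error dynamics \eqref{eq1-9} and using Young's inequality on the cross terms, I would get, for an ordinary step $k\ne k_m$,
\[
V(k+1)\le\gamma_1\bigl(\hat{\be}(k)+\nu(k)\bigr)^{T}\eta^{-1}(k)\bigl(\hat{\be}(k)+\nu(k)\bigr)\le 2\gamma_1 V(k)+c_\nu,
\]
with $c_\nu$ a constant multiple of $\bar{\nu}^2$ coming from $\eta^{-1}(k)\le\check{b}$ and $\nu(k)<\bar{\nu}$; for an impulsive step $k=k_m$ the additional displacement contributes a third square, so $V(k+1)\le 3\gamma_1 V(k)+c_{\nu\theta}$ with $c_{\nu\theta}$ a constant multiple of $\bar{\nu}^2+\bar{\theta}^2$ (using $\theta(k)<\bar{\theta}$). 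The hypothesis $0<2\gamma_1\le\frac{2}{3}$ makes both halves usable at once: it forces $2\gamma_1<1$, a strict contraction on every ordinary step, and $3\gamma_1\le1$, so an impulsive step never inflates $V$. Over one inter-impulse cycle — one impulsive step followed by $\ell_m-1\ge1$ ordinary steps — the composite gain is therefore at most $(3\gamma_1)(2\gamma_1)^{\ell_m-1}\le(\frac{2}{3})^{\ell_m-1}<1$. Iterating this cycle contraction and summing the induced geometric series of bounded perturbations shows that $V(k)$ enters and stays in a ball whose radius is of order $\bar{\nu}^2+\bar{\theta}^2$; dividing by the lower bound $\hat{b}$ finally converts this into the ultimate bound on $\|\hat{\be}(k)\|$ asserted by the theorem.
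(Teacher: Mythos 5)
Your proposal follows essentially the same route as the paper: the identical Lyapunov function $V(k)=\hat{\be}^T(k)\eta^{-1}(k)\hat{\be}(k)$, the same key property of $A(k)$ obtained from the matrix-inversion lemma, the same per-step bounds $V(k+1)\le 2\gamma_1 V(k)+c_\nu$ off the impulses and $V(k+1)\le 3\gamma_1 V(k)+c_{\nu\theta}$ at them, and the same iteration over inter-impulse cycles combined with Lemma~1's bounds $\hat{b},\check{b}$ to convert ultimate boundedness of $V$ into a bound on $\|\hat{\be}(k)\|$. Your write-up is in fact somewhat more explicit than the paper's about why $2\gamma_1\le\frac{2}{3}$ suffices (strict contraction off impulses, non-expansion at impulses, composite cycle gain below one), but the argument is the same.
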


\begin{proof}
The following LF (Lyapunov function) can be chosen as
\begin{equation}\label{eq1-16}
\begin{split}
V_1(k)=\hat{\be}^T(k)\eta^{-1}(k)\hat{\be}(k).
\end{split}
\end{equation}


According to the matrix inversion lemma, $A(k)=\gamma_1\eta(k+1)\eta^{-1}(k)$. Furthermore, based on the estimator error $\hat{\be}(k+1)$ in \eqref{eq1-9} and the Cauchy-Schwarz inequality, the difference of the LF can be deducted as
\begin{equation}\label{eq1-18}
\triangle V_1(k)\leq\left\{\begin{split}
&2\gamma_1\hat{\be}^T(k)\eta^{-1}(k)\hat{\be}^T(k)+2\gamma_1\nu^T(k)\eta^{-1}(k)\\
&\times\nu(k)-\hat{\be}^T(k)\eta^{-1}(k)\hat{\be}(k),~~k\neq k_m,\\
&3\gamma_1\hat{\be}^T(k)\eta^{-1}(k)\hat{\be}^T(k)+3\gamma_1\nu^T(k)\eta^{-1}(k)\\
&\times\nu(k)+3\gamma_1\theta^T(k)\eta^{-1}(k)\theta(k)\\
&-\hat{\be}^T(k)\eta^{-1}(k)\hat{\be}(k),~~~~k=k_m.\\
\end{split}\right.
\end{equation}

Considering Lemma 1, when $k\neq k_m$, we have
\begin{equation}\label{eq1-19}
\begin{split}
V_1(k+1)\leq&2\gamma_1V_1(k)+\varrho_1,\\
\end{split}
\end{equation}
where $\varrho_1=2\gamma_1\bar{\nu}^2\check{b}$.

Next, when $k=k_m$, we have
\begin{equation}\label{eq1-20}
\begin{split}
V_1(k+1)\leq&3\gamma_1V_1(k)+\varrho_2,\\
\end{split}
\end{equation}
where $\varrho_2=3\gamma_1(\bar{\nu}^2+\bar{\theta}^2)\check{b}$.

Then, when $k=k_m+a, a\in\{0,1,\ldots,\ell-1\}$, we can obtain that
\begin{equation}\label{eq1-21}
\begin{split}
V_1(k+1)\leq&(2\gamma_1)^{(m-1)(\ell-1)+a}(3\gamma_1)^mV_1(0)+\varrho,\\
\end{split}
\end{equation}
which implies
\begin{equation}\label{eq1-22}
\begin{split}
||\hat{\be}(k+1)||^2\leq&\frac{(2\gamma_1)^{(m-1)(\ell-1)+a}(3\gamma_1)^m||\check{b}_{0}||}{||\hat{b}||}\\
&\times||\hat{\be}(0)||^2+\frac{\varrho}{||\hat{b}||},\\
\end{split}
\end{equation}
where $\frac{\varrho}{||\hat{b}||}$ is the error bound. The larger the value of $\ell$, the smaller the bound $\varrho$.

Furthermore, considering the condition \eqref{eq1-15}, $||\hat{\be}_0(k+1)||^2\leq \frac{\varrho}{||\hat{b}||}$ for $k\rightarrow\infty$, which indicates that the target position can be estimated approximately.
\end{proof}

\begin{theorem}
Under the TPE and the DASC, Agent 1 and Agent 2 can achieve symmetrically AS centered at the target when the controller gain $\alpha$ satisfies the following condition,
\begin{equation}\label{eq1-23}
\begin{split}
0<3(1+\alpha)^2\ell\leq\frac{3}{4}.
\end{split}
\end{equation}
\end{theorem}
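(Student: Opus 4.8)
The plan is to mirror the impulsive Lyapunov argument of Theorem 1, this time for the anti-synchronization error $\be_s(k)=p_{1s}(k)+p_{2s}(k)$ governed by \eqref{eq1-12}, treating the estimation error $\hat{\be}(k)$ as an exogenous input that, by Theorem 1, has already been driven into the ball of radius $\varrho/\|\hat b\|$. First I would pick the quadratic Lyapunov function
\begin{equation*}
V_2(k)=\be_s^T(k)\be_s(k)
\end{equation*}
and compute $\triangle V_2(k)$ along \eqref{eq1-12} in the two regimes. For $k\neq k_m$ the right-hand side of \eqref{eq1-12} is a sum of three vectors, so the Cauchy--Schwarz (Young) inequality gives
\begin{equation*}
V_2(k+1)\leq 3(1+\alpha)^2V_2(k)+12\alpha^2\|\hat{\be}(k)\|^2+12\bar{\nu}^2,
\end{equation*}
while for $k=k_m$ the extra impulsive term $-2\theta(k)$ makes it a sum of four vectors and yields
\begin{equation*}
V_2(k+1)\leq 4(1+\alpha)^2V_2(k)+16\alpha^2\|\hat{\be}(k)\|^2+16\bar{\nu}^2+16\bar{\theta}^2.
\end{equation*}
Using Theorem 1 to replace $\|\hat{\be}(k)\|^2$ by its (exponentially decaying plus constant) upper bound, both estimates reduce to scalar recursions $V_2(k+1)\leq 3(1+\alpha)^2V_2(k)+\varsigma_1$ and $V_2(k+1)\leq 4(1+\alpha)^2V_2(k)+\varsigma_2$ with $\varsigma_1,\varsigma_2$ uniformly bounded, exactly the counterparts of \eqref{eq1-19}--\eqref{eq1-20}.

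Next I would propagate these recursions across one inter-impulse window, as in \eqref{eq1-21}. Since $k_m$ is an impulse instant, the step $k_m\mapsto k_m+1$ uses the factor $4(1+\alpha)^2$, and it is followed by at least $\ell$ ordinary steps (because $\ell_{m+1}>\ell$), each contributing the factor $3(1+\alpha)^2$; composing them gives a bound on $V_2(k_m+a)$, $a\in\{0,1,\ldots,\ell-1\}$, of the same structure as \eqref{eq1-21}, with $4(1+\alpha)^2$ and $3(1+\alpha)^2$ in the roles of $3\gamma_1$ and $2\gamma_1$. The gain condition \eqref{eq1-23}, i.e. $(1+\alpha)^2\leq 1/(4\ell)$, makes $3(1+\alpha)^2\leq 3/(4\ell)<1$ and $4(1+\alpha)^2\leq 1/\ell\leq 1$, so the accumulated gain over any full window is at most $4(1+\alpha)^2\big(3(1+\alpha)^2\big)^{\ell}\leq\big(3/(4\ell)\big)^{\ell}<1$; hence $\{V_2(k_m)\}_m$ obeys a contracting-plus-bounded recursion, is uniformly ultimately bounded, and the intermediate samples inherit the same bound. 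Letting $k\to\infty$ then confines $\be_s(k)$ to a ball whose radius is $O(\varsigma_1+\varsigma_2)$ and shrinks with the estimator bound $\varrho/\|\hat b\|$ and with larger $\ell$; invoking Definition 1, this is target-centered anti-synchronization, and because the DASC \eqref{eq1-10} superposes the periodic reference $\zeta(r,k)$ with $\|\zeta(r,k)\|=r$ onto the now mirror-symmetric pair $(\bx_1(k),\bx_2(k))$, the two agents persistently and symmetrically circumnavigate the target.

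The main obstacle is that this is a cascade rather than a textbook impulsive system: the ``disturbance'' $\hat{\be}(k)$ is itself only practically convergent and is re-excited at every $k_m$ by the possibly large jump $\theta(k)$, so only ultimate boundedness (approximate AS) can be claimed and the argument must be phrased as an input-to-state estimate with the estimator ball from Theorem 1 fed forward. The delicate point in making this rigorous is showing that the single expansive impulsive step per window --- factor $4(1+\alpha)^2$ together with the $16\bar{\theta}^2$ jump --- is genuinely dominated by the $\ell$ contractive steps that follow it; this is precisely why the dwell-time bound $\ell_m>\ell$ and the $\ell$-dependent gain condition \eqref{eq1-23} are both needed, and one should also check that \eqref{eq1-23} is compatible with the standing requirement $0<|1+\alpha|<1$ (indeed $(1+\alpha)^2\le 1/(4\ell)$ implies $|1+\alpha|<1$), which closes the loop with the bound $\bar\mu=|\alpha|(d_{12}(0)+2r)$ used earlier.
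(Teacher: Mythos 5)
Your proposal follows essentially the same route as the paper's proof: the same quadratic Lyapunov function $V_2(k)=\be_s^T(k)\be_s(k)$, the same Young/Cauchy--Schwarz splitting of \eqref{eq1-12} into three (resp.\ four) terms yielding the factors $3(1+\alpha)^2$ and $4(1+\alpha)^2$ with residuals $12\alpha^2\|\hat{\be}\|^2+12\bar{\nu}^2$ and $16\alpha^2\|\hat{\be}\|^2+16\bar{\nu}^2+16\bar{\theta}^2$, feeding in Theorem~1's bound on $\hat{\be}(k)$, and propagating the recursion over inter-impulse windows exactly as in \eqref{eq1-26}--\eqref{eq1-27} to conclude ultimate boundedness of $\be_s(k)$. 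Your added remarks on the dwell-time/contraction bookkeeping and the compatibility of \eqref{eq1-23} with $|1+\alpha|<1$ are clarifications of the same argument rather than a different method.
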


\begin{proof}
As to certificate the convergence of the AS error, the LF can be selected as
\begin{equation}\label{eq1-324}
\begin{split}
V_2(k)=\be_s^T(k+1)\be_s(k+1).
\end{split}
\end{equation}

Next, based on the quality \eqref{eq1-12} and the condition $\lambda\{p_{ij}(k)p_{ij}^T(k)\}\leq\overline{\nu}$, the difference of the LF can be deduced as
\begin{equation}\label{eq1-25}
\Delta V_2(k)
\leq\left\{\begin{split}
&(3(1+\alpha)^2-1)\be_s^T(k)\be_s(k)+12\alpha^2\hat{\be}^T(k)\hat{\be}(k)\\
&+12\nu^T(k)\nu(k),~~k\neq k_m,\\
&(4(1+\alpha)^2-1)\be_s^T(k)\be_s(k)+16\alpha^2\hat{\be}^T(k)\hat{\be}(k)\\
&+16\nu^T(k)\nu(k)+16\theta(k)^T\theta(k)\\
&~~k= k_m.\\
\end{split}\right.
\end{equation}

Recalling the conclusions mentioned in Theorem 1, $\hat{\be}(k)\rightarrow0$ for $k\rightarrow\infty$. Then, when $m\rightarrow\infty$, we have
\begin{equation}\label{eq1-26}
V_3(k+1)
\leq\left\{\begin{split}
& 3(1+\alpha)^2V_3(k)+\sigma_1,k\neq k_m,\\
& 4(1+\alpha)^2V_3(k)+\sigma_2,k= k_m,\\
\end{split}\right.
\end{equation}
where $\sigma_1=12\bar{\nu}^2$ and $\sigma_2=16(\bar{\nu}^2+\bar{\theta}^2)$.

Furthermore, when $k=k_m+a,~a\in\{0,1,\ldots,\ell-1\}$, we have
\begin{equation}\label{eq1-27}
\begin{split}
V_2(k+1)\leq&(3(1+\alpha)^2)^{(m-1)(\ell-1)+a}(4(1+\alpha)^2)^mV_2(0)\\
&+\sigma,\\
\end{split}
\end{equation}
where $\sigma$ is the error boundary.

Considering the condition (22), the following result is established,
\begin{equation}\label{eq1-28}
\begin{split}
\lim_{k\rightarrow\infty}||\be_s(k+1)||^2\leq\sigma,
\end{split}
\end{equation}
which implies that the Agent 1 and Agent 2 can
circumnavigate all targets while maintaining AS within the error boundary $\sigma$.
\end{proof}
\section{SIMULATIONS AND EXPERIMENTS}
\textbf{Numerical simulation:}
To further prove that the designed estimator and DAS controller can ensure that Agent 1 and Agent 2 can symmetrically encircle a non-cooperative target that has the ability to break through the encirclement circle, a simulation example is presented.

Suppose that the displacement mode of the target is
\begin{equation*}
\begin{split}
\bh(k)=&\left[
         \begin{array}{c}
           0.02\cos(0.01k) \\
           0.02\sin(0.01k) \\
         \end{array}
       \right]\\
&+\sum_{m=1}^\infty\Big\{\left[
         \begin{array}{c}
           1.5rand(1) \\
           1.5rand(1) \\
         \end{array}
       \right]\Big\}\delta(k-k_m),
\end{split}
\end{equation*}
where the function $rand(1)$ is used to generate a random number evenly distributed between (0, 1). This random number is meant to simulate the short-term high-speed burst capability of the target to escape the encirclement. The minimum impulsive interval $\ell$ is set as $20$.

\begin{figure}
\centering
  \includegraphics[width=8cm]{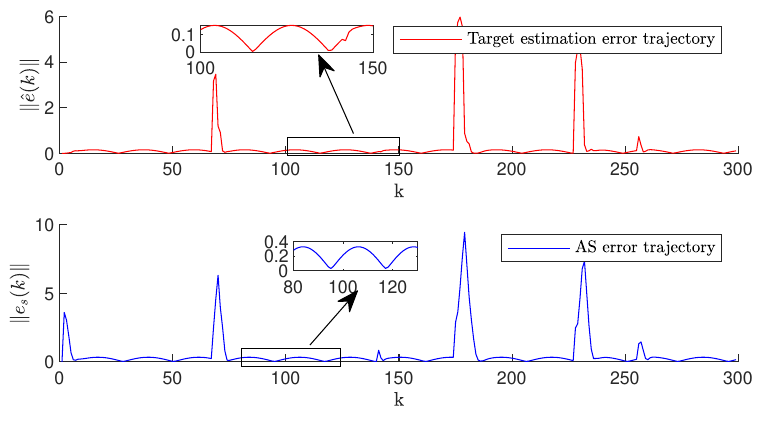}
 \caption{The trajectories of the target estimation error and the AS error.}
  \label{error}
\end{figure}

\begin{figure}
\centering
  \includegraphics[width=8cm]{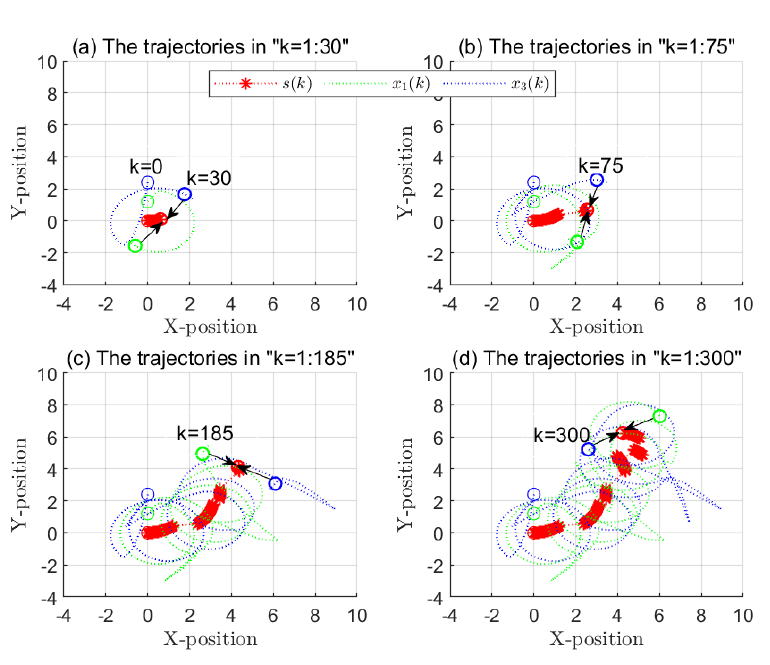}
 \caption{The trajectories of the two agents and the target for k=1:300.}
  \label{trajectory}
\end{figure}

Based on the Theorem 1 and Theorem 2, the exponential forgetting factor $\gamma_1$ is given as $0.3$,  the new information utilization factor $\gamma_2$ as $0.9$, the controller gains $\alpha$ as $-0.85$.

Furthermore, the upper and lower speed of the two agents can not exceed $-0.5$ and $0.5$, respectively. In this simulation, consider that all agents can convert their position in the local coordinate system to the global coordinate system. All initial positions of the two agents and the target in the global coordinate system are given as
\begin{equation*}
\begin{split}
&\bx_{1}(0)=[0,1.2]^T,~\bx_{2}(0)=[0,2.4]^T,\\
&\bs(0)=\hat{\bs}(0)=[0,0]^T,
\end{split}
\end{equation*}
and the preset trajectory as
\begin{equation*}
\begin{split}
&\zeta(k)=[2sin(\frac{k\pi}{24}),2cos(\frac{k\pi}{24})]^T.
\end{split}
\end{equation*}

Under the action of the target position estimator and the DAS controller, the simulation results can be obtained in Fig. \ref{error} - Fig. \ref{trajectory}. From Fig. \ref{error}, it is easy to see that the target estimation error and the AS error have suddenly increased when the target escapes, and then overall the errors have converged to a very small value, i.e, $\|\hat{\be}(k)\|\leq 0.12$ and $\|\be_s(k)\|\leq 0.38$. Fig. \ref{trajectory} has shown the real-time positions and the moving trajectories of all agents and target at the instants $k=30, 75, 185, 300$. When the target escapes from the circle of two agents, the two agents can quickly locate the target and form a circle around it again. In addition, the two agents can maintain AS to achieve sensor coverage of the target.

\textbf{Real-world UAV-based experiment:}
In the real-world UAV-based experiment, we assume the target is actively broadcasting its image feed so that the relative distance between agents and the target can be approximated. In order to fly multiple drones at a reasonable cost, commercial off-the-shelf (COTS) low-cost Tello drones are used to represent both the target and the following agents (UAV0 is the target, UAV1 and UAV2 are the two agents). The Tello drone was chosen because it is crash-resistant, has an open API for streaming camera feeds, and can be controlled in real-time. Because it is an 80-gram drone, no COTS distance measurement device such as an RGBD camera, Zigbee, or UWB can be directly installed. In addition to the take-off weight limit, an additional port forwarding mechanism is required because all of the drones feed to the same IP and ports, resulting in conflicts. Extra Raspberry PI 4s are deployed to receive the IP at the fixed IP/port and relay the data to another floating IP/port to deconflict the IP. 

\begin{figure}
\centering
  \includegraphics[width=8.5cm]{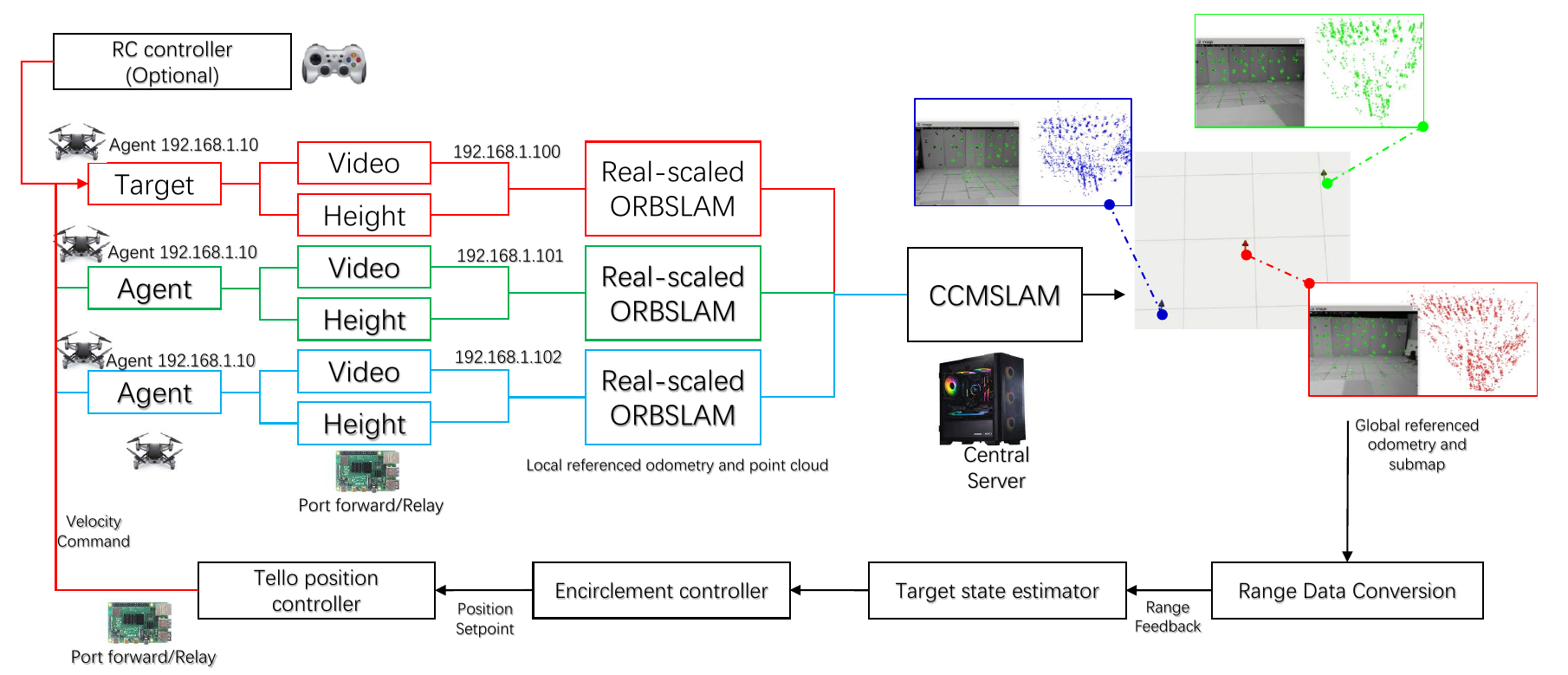}
 \caption{The overall experiment setup.}
  \label{fig:overallexperiment}
\end{figure}

The CCMSLAM\cite{schmuck2017multi} is used to find the relative pose relationship between each ORBSLAM \cite{murORB2} output point cloud in order to obtain range information by using 3D space expanded model \eqref{eq1-1}. Tello altitude measurement is used to adjust the point cloud scale. An overall global point cloud map can be created by minimizing the reprojection error of ORBSLAM submaps from each drone. The relative position between the individual submap starting point and current pose is known because each drone is running independent scale-aware ORBSLAM. As a result, the distance between each drone can be calculated. After obtaining the relative distances, the system performs the procedure described in the section to form the close-loop control. The overall experiment setup can be seen from Fig. \ref{fig:overallexperiment}.

\begin{figure}
\centering
  \includegraphics[width=7cm]{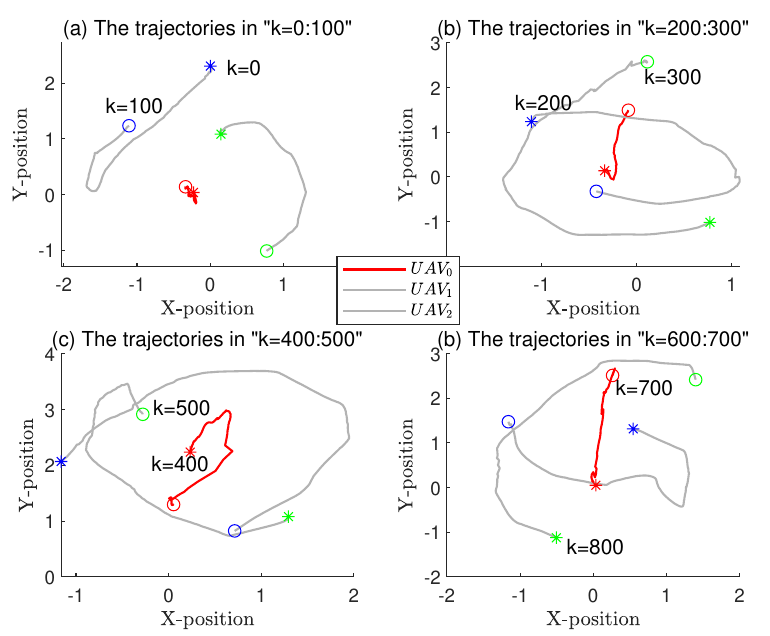}
 \caption{The trajectories of all UAVs in the different time periods when UAV0 (target) is given motion rule.}
 \captionsetup{belowskip=0pt}
  \label{auto}
\end{figure}

\begin{figure}
\centering
  \includegraphics[width=7cm]{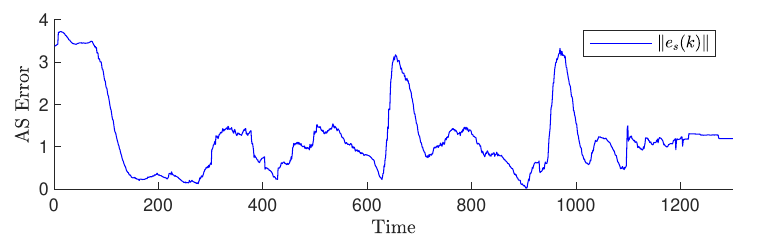}
 \caption{The AS error trajectories when UAV0 (target) is given motion rule.}
  \label{auto_error}
\end{figure}

 The target was controlled using two different methods to simulate escape: by manually controlling and by providing the same motion rules as the numerical simulation example. Part of the experiment results are shown in Fig. \ref{auto} - Fig. \ref{hand-error}, in which $\ast$ and $\circ$ denote the positions of all UAVs at the beginning and end of the time period, respectively. It is apparent that, while TPE and DASC are active, UAV1 and UAV2 can find, follow, and capture UAV0 relatively easily. The experiment's specifics are available online \textcolor[rgb]{0.00,0.0,1.00}{https://youtu.be/JXu1gib99yQ}.

\begin{figure}
\centering
  \includegraphics[width=7cm]{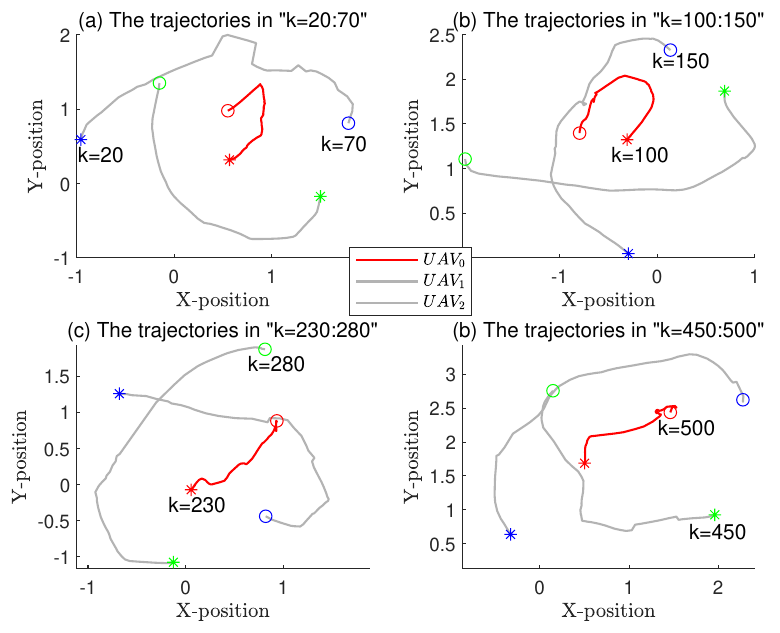}
 \caption{The trajectories of all UAVs in the different time periods when UAV0 (target) is under manually control.}
  \label{hand}
\end{figure}

\begin{figure}
\centering
  \includegraphics[width=7cm]{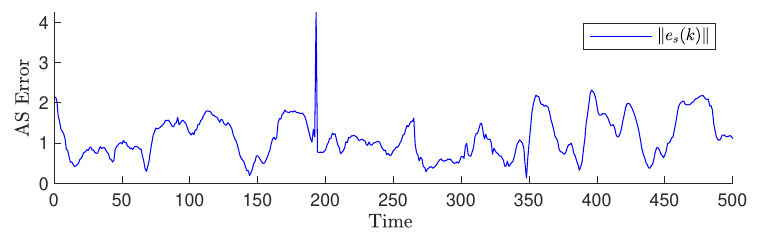}
 \caption{The AS error trajectories when UAV0 (target) is under manual control.}
  \label{hand-error}
\end{figure}

\section{Conclusions}
This study looked into the encirclement of non-cooperative targets by two agents. First, an approximation of the target position has been achieved to ensure that both agents can track the target.
Second, a controller algorithm has been created, which includes the algorithms for the two agents to surround and obstruct the target.Through careful analysis, Theorems 1 and 2 have shown that the proposed estimator and controller converge.
The efficiency of the controller has demonstrated by a numerical simulation and a real-world UAV-based experiment, which shows that the two agents can symmetrically encircle the target. In addition, a more realistic stochastic moving model of the target, e.g, the Brownian motion, and the collision avoidance of the agent can be further researched.





\bibliographystyle{IEEEtran}
\bibliography{IEEEfull}

\end{document}